\newtheorem{lemma}{Lemma}
\newtheorem{thm}{Theorem}
\newtheorem{remark}{Remark}
\def \a {\mathbf{a}}
\def \u {\mathbf{u}}
\def \v {\mathbf{v}}
\def \w  {\mathbf{w}}
\def \x {\mathbf{x}}
\def \A {\mathbf{A}}
\def \I {\mathbf{I}}
\def \J {\mathbf{J}}
\def \S {\mathbf{S}}
\def \W {\mathbf{W}}
\def \bqsa {\begin{eqnarray}}
\def \eqsa {\end{eqnarray}}
\def \bqs {\begin{equation}\begin{aligned}}
\def \eqs {\end{aligned}\end{equation}}
\title{Distributed Primal-Dual Optimization for Online Multi-Task Learning}
\author{
Peng Yang and Ping Li \\
Cognitive Computing Lab\\
Baidu Research\\
10900 NE 8th ST, Bellevue WA, 98004, USA\\
\{pengyang01, liping11\}@baidu.com
}
\begin{document}

\maketitle

\begin{abstract}

Conventional online multi-task learning algorithms suffer from two critical limitations: 1) Heavy communication caused by delivering high velocity of sequential data to a central machine; 2) Expensive runtime complexity for building task relatedness. To address these issues, in this paper we consider a setting where multiple tasks are geographically located in different places, where one task can synchronize data with others to leverage knowledge of related tasks. Specifically, we propose an adaptive primal-dual algorithm, which not only captures task-specific noise in adversarial learning but also carries out a projection-free update with runtime efficiency.
Moreover, our model is well-suited to decentralized periodic-connected tasks as it allows the energy-starved or bandwidth-constraint tasks to postpone the update. Theoretical results demonstrate the convergence guarantee of our distributed algorithm with an optimal regret. Empirical results confirm that the proposed model is highly effective on various real-world datasets.

\end{abstract}

\section{Introduction}

Multi-task learning (MTL) is widely used learning framework where similar tasks are considered jointly for the purpose of improving performance compared to learning the tasks separately~\cite{caruana1997multitask}. By transferring information between tasks it is hoped that samples will be better utilized, leading to improved generalization performance.
MTL has been successfully applied in practical scenarios, e.g., speech recognition~\cite{seltzer2013multi}, image classification~\cite{lapin2014scalable}, disease gene prediction~\cite{zhou2013modeling}, etc. Recent years also witness extensive studies on streaming data, known as online multi-task learning (OMTL)~\cite{DekelLS06,Saha,yang2017robust}, for the merits of capturing the dynamically changing and uncertain nature of the environment, which is in contrast to the offline setting in which the objective functions are fixed~\cite{liu2017distributed,smith2017federated}.

Existing OMTL techniques suffer from the heavy communication caused by centralizing the high velocity of sequential data from different locations to a single machine. In this paper, we address OMTL problem in a distributed manner, i.e., multiple tasks are geographically located in different places, where task models can synchronize data with others to leverage knowledge of related tasks. In such setting, each task $i$ is endowed with a sequence of objective functions $(f^i_{t})_{t=1}^T$, where $f^i_{t}$ is the loss function of $i$-th task  at round $t$. The goal boils down to minimizing the sequential objective functions across $m$ different tasks,
\bqs\label{DMTL}
\min_{\W: \W \in \mathcal{K}} \sum_{t=1}^TF_t(\W) 
\eqs
where $F_t(\W) = \sum_{i=1}^m f^i_{t}(\w^i)$ is denoted as the instantaneous loss at round $t$, while $\W = [\w^1,\ldots,\w^m]\in\mathbb{R}^{d\times m}$ is parameter matrix for $m$ tasks. Note that $\mathcal{K}$ is a closed convex subset characterized by an inequality, i.e., $\mathcal{K} = \{\W | g(\W)\leq 0\}$. It aims to constrain $\W$ into simple sets, e.g., hyperplanes, balls, bound constraints, etc.  We assume $m \leq d$ without loss of generality. The algorithm for solving problem~(\ref{DMTL}) has to be distributed, in the sense that each task, accessing only local task data, is required to communicate with other tasks in a bandwidth-limited network.
Moreover, the distributed tasks have to learn incrementally from data streams, for the merits of capturing the dynamic nature of the changing environment~\cite{zhang2018distributed}.

The success of multitask learning relies on the relatedness between tasks. To build the task relationship, existing algorithms used low-rank matrices to enforce different tasks to share a common structure~\cite{smith2017federated,yang2019confidence}. The low-rank optimization problem can be solved by the first-order optimization methods, e.g., subgradient descent~\cite{bauschke2011fixed}. 
Although these methods are guaranteed to converge, they are inefficient because a singular value decomposition (SVD), which takes $O(dm^2)$ time, is required at each round. To reduce the computation complexity, many efficient solvers have been developed by replacing the full SVD with a partial SVD. However, those approaches either require the function to be smooth~\cite{wang2016distributed1} or are designed for the constraint optimization~\cite{zheng2018distributed}.
Furthermore, the problem that different tasks generally have different noise levels was ignored by those approaches mentioned above.
The adversarial noise with large loss residues may corrupt the task relativeness by the conventional convex loss functions.
To deal with noise, a calibrated multivariate regression approach was developed in~\cite{liu2014multivariate}, and then it was further improved in~\cite{gong2014efficient}. Nevertheless, both of them are based on feature learning and the optimization techniques are computationally expensive.

In this work, we propose an efficient distributed algorithm to address both issues simultaneously.  The main contributions of this work are summarized as follows:
\begin{enumerate}
\item We introduce a capped $L_p$-norm loss function to capture the adversarial noise. We derive a weighted loss function to iteratively reduce the negative impact of noise according to noise level of specific tasks.
\item The constrained task relatedness is learned by a projection free primal-dual algorithm. In each round, it only needs to compute the leading singular vectors instead of a full SVD, reducing time complexity from $O(dm^2)$ to $O(dm)$.
\item The proposed algorithm is well-suited to decentralized periodic-connected tasks, as it allows the energy-starved or bandwidth-limited tasks to alleviate synchronization delay.
\item Theoretical results demonstrate the convergence guarantee of  our distributed model with an optimal regret. Empirical results
confirm that the proposed algorithm is  effective.
\end{enumerate}

\section{Algorithm}

In this problem, we are faced with $m$ different but related classification problems also known as tasks. The task model is learned on a sequence of instance-label pairs, i.e., $\{(\x_t^i, y_t^i)\}_{1\leq t\leq T}^{1\leq i\leq m}$, where the instance $\x_t^i\in\mathbb{R}^{d}$ is drawn from a distinct distribution $p^i$, and $y_t^i\in\{\pm 1\}$. The algorithm maintains $m$ separate models in parallel, one for each task. When the instances $\{\x_t^1,\ldots,\x_t^m\}$ are observed at round $t$, the model generates a decision matrix $\W_t = [\w_t^1,\ldots,\w_t^m]\in\mathbb{R}^{d\times m}$ under a constraint set $\mathcal{K}$. Then it suffers the corresponding loss $F_t(\W_t) = \sum_{i=1}^m f^i_{t}(\w_{t}^i)$ where $f^i_{t}$ is a convex loss function. The goal of online learner is to generate a sequence of decision points $\{\W_{t}\}_{t=1}^T$, so that the regret regarding to the best fixed decision can be minimized,
\bqs\label{regret}
\hbox{Reg}_{T} := \sum_{t=1}^T F_t(\W_t) - \sum_{t=1}^T F_t(\W^*),
\eqs
where $\W^* = \operatorname*{argmin}_{\W\in\mathcal{K}} \sum_{t=1}^TF_{t}(\W)$ is the best decision in hindsight. An algorithm achieves nontrivial performance if its regret is sublinear over the number of total rounds $T$.

The success of multi-task learning relies on the relatedness between tasks. To learn the task relationship, existing algorithms exploit low-rank constraints to enforce different tasks to share a common structure~\cite{smith2017federated,xie2017privacy,baytas2016asynchronous}.
To yield a low-rank solution in (\ref{regret}), these methods aim to minimize the following constrained problem:
\bqs\notag
\min_{\W} \sum_{t=1}^T\sum_{i=1}^m f^i_t(\w^i), \ \ \hbox{s.t.} \ \ \hbox{rank}(\W)\leq r
\eqs
where $r$ is a predefined value with $r \ll \min(d,m)$, and $\hbox{rank}(\cdot)$ denotes the matrix rank, i.e., the number of non-zero singular values.
Note that the constrained objective is equivalent to the regularized objective function with a proper parameter $\lambda > 0$,
\bqs\label{W-regular}
\min_{\W} \sum_{t=1}^T\sum_{i=1}^m f^i_{t}(\w^i)+ \lambda\ \hbox{rank}(\W).
\eqs
Although above problems are equivalent, specific optimization techniques could be more suitable for one particular type of objective functions\footnote{ Alternating direction method of multipliers for regularized objective function and frank-wolfe for constrained objective function. Gradient descent methods can be adopted for both, leads to proximal and projected methods, respectively.}.
For convenience, we won't distinguish between these two formulations in this work.\\

In this paper, we make the following assumptions: 
\begin{itemize}
  \item The loss function $f_t(\w)$ is convex, i.e., $\forall \ \w, \w'$ in the domain of $f_t$,
$f_t(\w) \geq f_t(\w') + \langle \nabla f_t(\w') , \w - \w' \rangle$. 
  \item The loss function $f_{t}(\w)$ is $\beta$-\emph{Lipschitz} on a convex set, i.e., $\forall \ \w, \w'$ in the domain of $f_t$, $|f_{t}(\w) - f_{t}(\w')| \leq \beta\|\w - \w'\|_2$. 
  \item The concave function $h(u) = \min(u^p, \xi)\ (\xi > 0)$ has a bounded supergradient at any point $u = f(\w)$ with $p\in(0,1)$, i.e., $ \| \nabla_{u} h(u) \|_2 \leq \kappa $.
  \item Euclidean diameter of primal variable $\w$ or dual variable $\a$ is bounded by $D$, i.e., $\|\w - \w'\|_2 \leq D$ and $\|\a - \a'\|_2 \leq D$.
\end{itemize}

\subsection{Adversarial Learning}



In adversarial learning, the feedback observed by the learner is malicious inputs designed to fool machine learning models. 
Adversarial noise with large loss residues may corrupt task relativeness by the conventional convex loss $f_t$.
To be resistant to noise, each task should have a specific regularization parameter that depends on the specific noise level. To achieve this goal, we exploit a capped $L_p$-norm function with $p\in(0,1)$:
\bqs\label{L_P-norm}
  h(f^i_t(\w^{i})) = \min\left(f^i_t(\w^{i})^p, \xi\right),
\eqs
where $h(f^i_t(\cdot))$ enforces a capped $L_p$-norm over loss function $f^i_t$ with an upper bound $\xi > 0$. It indicates that no matter how misclassified the data point is, the loss residue in (\ref{L_P-norm}) is capped by $\xi$. This makes the loss function robust to noise since their effect to the model is bounded. However, optimizing this problem is difficult since the function $\min(f^i_t(\cdot)^p, \xi)$ is a concave non-smooth function in the domain of $f^i_t$. Motivated by concave duality~\cite{rockafellar1970convex}, Lemma~\ref{reweight-obj} provides an iterative weighted function to solve it.

\begin{lemma}\label{reweight-obj}
Problem~(\ref{L_P-norm}) can be relaxed to minimizing a weighted convex formulation:
\bqs\label{rew-rule}
 \min_{\w^i} \gamma_t^i f^i_{t}(\w^i),
\eqs
where $\gamma_t^i = \nabla_uh(u)|_{u=f^i_t(\w)}$ is the supergradient of the concave function $h(u)$ at the point $u=f^i_{t}(\w_t^i)$,
\bqs\label{gamma}
\gamma_t^i = \left\{
                  \begin{array}{ll}
                    p f^i_{t}(\w_t^i)^{p-1}, & f^i_{t}(\w_t^i)^{p} \leq \xi \\
                    0, & \hbox{otherwise}
                  \end{array}
                \right.
\eqs
\end{lemma}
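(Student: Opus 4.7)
The plan is to derive the weighted reformulation via concave duality, specifically the supergradient inequality for concave functions (this is the standard majorization–minimization trick used for DC/capped losses). First I would recall that since $h(u) = \min(u^p, \xi)$ is concave on $u \geq 0$ (it is the pointwise minimum of the concave $u^p$ and the constant $\xi$), for any point $u_0$ in its domain and any supergradient $\gamma \in \partial h(u_0)$ one has the global upper bound
\begin{equation}
h(u) \;\leq\; h(u_0) + \gamma\,(u - u_0) \qquad \forall u \geq 0 .
\end{equation}
Applying this with $u = f_t^i(\w^i)$ and the linearization point $u_0 = f_t^i(\w_t^i)$ majorizes $h(f_t^i(\w^i))$ by an affine function whose only $\w^i$-dependent part is $\gamma_t^i f_t^i(\w^i)$; minimizing this surrogate in $\w^i$ then gives exactly the reweighted convex problem~(\ref{rew-rule}), since the additive constants $h(u_0) - \gamma_t^i u_0$ do not affect the argmin. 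This is the sense in which the capped problem is ``relaxed'' to the weighted convex formulation: we upper-bound the objective by its supergradient majorizer at the current iterate.

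Next I would compute $\gamma_t^i$ explicitly by differentiating $h$. On the region where $u^p < \xi$, $h$ is smooth and $h'(u) = p u^{p-1}$; on the region $u^p > \xi$, $h$ is constant and $h'(u) = 0$. At the kink $u^p = \xi$ the superdifferential is the interval $[0, p u^{p-1}]$, and any selection gives a valid supergradient; choosing the zero branch on the saturated side yields precisely the two-case formula~(\ref{gamma}). I would also remark that $\gamma_t^i \geq 0$, which matches the concave-majorization intuition that a larger current loss should receive a smaller weight (down-weighting likely adversarial points once $f_t^i(\w_t^i)^p$ exceeds $\xi$, setting $\gamma_t^i = 0$).

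Finally I would tie the two pieces together: the upper-bound inequality plus the explicit supergradient formula show that after fixing the weights $\gamma_t^i$ at the current iterate, optimizing $\gamma_t^i f_t^i(\w^i)$ decreases (or at least does not increase) the original capped objective, which is exactly the MM / reweighting step claimed by the lemma. The only delicate point---and the piece I expect to take a little care---is justifying concavity of $h$ on the relevant domain and handling the non-differentiable kink at $u^p = \xi$ so that the specific branch selection in~(\ref{gamma}) is a \emph{legitimate} element of the superdifferential; once that is in hand the rest of the argument is a one-line application of the concave-duality inequality.
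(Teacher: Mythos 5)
Your proof is correct, but it takes a different (and arguably cleaner) route than the paper's formal proof. The paper proves the lemma via the concave conjugate: it writes $h(u)=\min(u^p,\xi)=\inf_{\gamma\ge 0}[\gamma u - h^*(\gamma)]$, explicitly computes the dual $h^*(\gamma)$ in two cases, and then recovers the optimal $\gamma$ by substituting $h^*$ back into the variational representation. You instead invoke the supergradient inequality $h(u)\le h(u_0)+\gamma(u-u_0)$ directly at $u_0=f^i_t(\w^i_t)$, observe that the resulting affine majorizer has $\gamma_t^i f^i_t(\w^i)$ as its only $\w^i$-dependent term, and compute the superdifferential branch by branch, including the kink at $u^p=\xi$. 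The two arguments are two faces of the same concave duality (the infimum in the conjugate representation is attained precisely at a supergradient of $h$ at $u$), and in fact the paper states your majorization argument almost verbatim in its second Remark following the lemma rather than in the proof itself. What your route buys is a self-contained justification of the word ``relaxed'' in the lemma statement --- the MM descent property that minimizing the surrogate cannot increase the capped objective --- and an explicit treatment of the non-differentiable point, both of which the paper's conjugate computation leaves implicit. What the paper's route buys is the closed form of $h^*(\gamma)$ itself, which is not needed for the lemma's conclusion. The only edge case neither treatment addresses is $f^i_t(\w^i_t)=0$, where $pu^{p-1}$ blows up; the paper excludes this by assuming a bounded supergradient at any encountered $u=f(\w)$, so your argument goes through under the paper's standing assumptions.
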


\begin{proof}
Let $u = f(\cdot)$ and $h(u) = \min(u^p, \xi)$. Since $f(\cdot)$ is a convex function, $h(u)$ can be formulated as:
\bqs\label{dual-loss}
\min(u^p, \xi) = \inf_{\gamma\geq 0}[\gamma u - h^*(\gamma)],
\eqs
where $h^*(\gamma)$ is the concave dual of $h(u)$, defined as:
\bqs\notag
h^*(\gamma) & = \inf_{u>0}(\gamma u - h(u)) \overset{(\ref{dual-loss})} {=} \inf_{u>0}[\gamma u - \min(u^p, \xi)] \\
= & \left\{
           \begin{array}{ll}
             \frac{p-1}{p}p^{\frac{1}{1-p}}\gamma^{\frac{1-p}{p}}, & \hbox{if} \ u^p < \xi \\
             \gamma\xi^{\frac{1}{p}} - \xi, & \hbox{if}\ u^p \geq \xi.
           \end{array}
  \right.
\eqs
Equipped $h^*(\gamma)$ back to Eq.~(\ref{dual-loss}), we obtain $\gamma$ as in Eq.~(\ref{gamma}).
\end{proof}

\noindent 
\begin{remark}
\emph{We observe that $\gamma_t^i$ depends on the loss $f^i_{t}(\w_t^i)$. In particular, a misclassified point with $f^i_{t}(\w_t^i)^p > \xi$ will be considered as an outlier, and ignored, i.e., $\gamma_t^i = 0$.}
\end{remark}

\begin{remark}
\emph{The derived solution in~(\ref{rew-rule}) minimizes the upper bound of the concave problem~(\ref{L_P-norm}) iteratively.
Since $h(f_t(\w^i))$ is a concave function, for any $\w^i$ we obtain an upper bound of $h(f^i_t(\w^i))$ via a linear approximation,
\bqs\notag
h(f^i_t(\w^i)) \leq h(f^i_t(\w^i_t)) + \langle \gamma^i_{t}, f^i_t(\w^i) - f^i_t(\w^i_t) \rangle,
\eqs
where $\gamma^i_{t} = \nabla_uh(u)|_{u=f^i_t(\w^i_t)}$. 
Since $f_t(\w_t^i)$ is constant, 
$ \min_{\w^i} \ h(f^i_t(\w_t^i)) + \langle \gamma_t^i, f^i_t(\w^i) - f^i_t(\w_t^i) \rangle \equiv \min_{\w^i} \ \langle \gamma_t^i, f^i_t(\w^i)\rangle$, which obtains a convex loss to minimize the upper bound of $h(f^i_t(\w^i))$.}
\end{remark}

\subsection{Projection-free Optimization}

The refined problem, $\sum_{t=1}^T\sum_{i=1}^m \langle \gamma_t^i, f^i_t(\w^i)\rangle + \lambda\hbox{rank}(\W)$, is non-convex and computationally intractable~\cite{amaldi1998approximability}. We relax $\hbox{rank}(\cdot)$ to its convex surrogate, i.e., nuclear norm $\|\cdot\|_*$, then  the problem becomes
\bqs\label{W-regular-1}
\min_{\W} \sum_{t=1}^T\sum_{i=1}^m \langle \gamma_t^i, f^i_t(\w^i)\rangle + \lambda\|\W\|_*.
\eqs
The nuclear norm minimization can be solved by gradient descent and proximal gradient descent. Although these methods are guaranteed to converge, they have to perform a full SVD of $\W_t$ in each round, which suffers a hight runtime complexity of $O(dm^2)$. \cite{hazan2012projection} provided a linear optimization method to solve this issue, but its computational effectiveness is achieved at the expense of a suboptimal regret. 
To reduce runtime complexity, we study the dual form of the nuclear norm, $\|\W\|_* = \max_{\|\A\|_2 \leq 1} \hbox{tr}(\A^{\top}\W)$ where $\|\cdot\|_2$ is the spectral norm, and then cast the problem~(\ref{W-regular-1}) into the following primal-dual formulation:
\bqs\notag
\min_{\W}\max_{\A}  \sum_{t=1}^T\sum_{i=1}^m \langle \gamma_t^i, f^i_t(\w^i)\rangle + \lambda\hbox{tr}(\A^{\top}\W)  \ \hbox{s.t.} \ \  \|\A\|_2 \leq 1.
\eqs
Since the above optimization problem is convex-concave, we can apply the online subgradient method to solve it.
However, due to the spectral norm constraint of $\A$, we have to project the intermediate solution onto the unit spectral norm ball, which again requires a full SVD operation~\cite{xiao2017svd}.

To address this issue, we replace the constraint $\|\A\|_2 \leq 1$ with a regularization term to control the spectral norm of $\A$,
\bqs\notag
\min_{\W}\max_{\A} \sum_{t=1}^T\sum_{i=1}^m \gamma_t^if^i_t(\w^i) + \lambda\hbox{tr}(\A^{\top}\W) - \rho[\|\A\|_2 - 1]_+,
\eqs
where $\rho > 0$ is a trade-off parameter and $[\cdot]_+ = \max(0,\cdot)$. We assign $\rho = 1$, $\lambda = 1$ since such setting can control the rank, i.e., $\|\W\|_* \leq \rho/\lambda$. To solve the above problem, we can use the online subgradient method~\cite{shalev2012online}, which iterates as follows:
\bqs\notag
&\A_{t+1} = \A_t + \eta_t\left(\W_{t} - \partial[\|\A_{t}\|_2 - 1]_+\right), \\
&\W_{t+1} = \W_t - \eta_t\left(\A_{t+1} + \nabla F_{t}(\W_t)\Gamma_t\right),
\eqs
where $\Gamma_t = \hbox{diag}(\gamma_t^1,\ldots,\gamma_t^m)\in\mathbb{R}^{m\times m}$.

Note that the subgradient $\partial[\|\A\|_2 - 1]_+$ can be computed efficiently. We denote $\sigma_1(\A)$ as the leading singular value of $\A$, $\u$ and $\v$ as the corresponding left and right singular vectors. Then we have
\bqs\notag
\u\v^{\top}\mathbb{I}(\sigma_1(\A) > 1) \in \partial[\|\A\|_2 - 1]_+.
\eqs
In each round, we only need to compute the leading singular vector of $\A_t$ with $O(dm)$ time. In contrast, a full SVD takes $O(dm^2)$ time.

\begin{table*}
\centering
\caption{Comparison with distributed variants of proximal gradient descent (ProxGD), alternating direction method of multipliers (ADMM), online frank-wolfe (OFW) in terms of computational complexity and communication efficiency.}
\label{Summary-Resources-Required}
\begin{tabular}[3\textwidth]{|c|c|c|c|c|c|} \hline
  Algorithms       & Worker Comp.       &  Server Comp.         & Communication   & Time Complexity    &  Regret  \\ \hline \hline
 ProxGD            & Gradient Comp.     &  SV Shrinkage         &   $2d$    & $m^2d$              &  $T^{1/2}$ \\ \hline
 ADMM              & ERM                &  SV Shrinkage         &   $3d$    & $m^2d$              &  $T^{1/2}$ \\ \hline
 OFW               & Gradient Comp.     &  Leading SV Comp.     &   $2d$    & $md$                &  $T^{3/4}$ \\ \hline \hline
 DROM              & Gradient Comp.     &  Leading SV Comp.     &   $2d$    & $md$                &  $T^{1/2}$ \\ \hline
\end{tabular}
\end{table*}

\section{Distributed Learning}

Though above algorithm is efficient, heavy communication is caused by centralizing the high velocity of sequential data to a central machine. To address this issue, we show how to perform the primal-dual optimization in a distributed manner.

Assume that tasks are distributed on local worker machines, i.e., one worker for each task, our core idea is to solve the \emph{local problem} on each local machine independently, and then centralize the updated information of each task to efficiently solve a \emph{central problem}.
The proposed algorithm, namely DROM, is summarized in Algorithm~\ref{POOM}. It runs an alternating optimization procedure that comprises two steps: 1) Local-step: solving $\{\w^i, \a^i\}_{i=1}^m$ in a distributed manner among local workers independently; 2) Central-step: solving $\partial[\|\A\|_2 - 1]_+$ with aggregated $\{\a^i\}_{i=1}^m$ from all workers on central server. Figure~\ref{master-slaver-setting} illustrates the  procedure~of~DROM.

\begin{algorithm}[h!]
\caption{DROM: Distributed Primal-dual optimization for Online MTL}\label{POOM}
\begin{algorithmic}[1]
\State {\bf Input}: data $\{\x^i_{t},y^i_{t}\}$ with $i\in[m]$ and $t\in[T]$ distributed over $m$ machines, parameters $p$ and $\xi$ \;
\State {\bf Initialize}: $\w^i_{0} = \mathbf{0}$, $\a^i_{0} = \mathbf{0}$ for all workers $i\in[m]$ \;
\For{$t=1,\ldots, T$}
  \State \textbf{for all workers (Local-step) do in parallel}
  \State \quad $
         \gamma_t^i = \left\{
                  \begin{array}{ll}
                    p f^i_{t}(\w_t^i)^{p-1}, & f^i_{t}(\w_t^i)^{p} \leq \xi \\
                    0, & \hbox{otherwise}
                  \end{array}
                \right.
         $
        \State \quad When $\gamma_t^i > 0$, local update with $\eta_t = \frac{1}{\sqrt{t}}$:
        \bqs\label{primal-dual-optim-local}
         & \a^i_{t+1} = \a^i_{t} + \eta_t \left(\w^i_{t} - [\u\v^{\top}]_i \right); \\
         & \w^i_{t+1} = \w^i_{t} - \eta_t \left(\a^i_{t+1} + \gamma^i_{t}\nabla f^i_{t}(\w^i_t) \right);
         \eqs
         \State \quad send $\a^i_{t+1}$ to the server, wait to receive $\{\u,[\v]_i\}$;
  \State  \textbf{Reduce (Central-step)}: The server aggregates $\A$ and computes
        $\u\v^{\top}\mathbb{I}(\sigma_1(\A) > 1) \in \partial[\|\A\|_2 - 1]_+;
        $
  \State Server sends back $\{\u,\v\}$ when $\sigma_1(\A) > 1$;
\EndFor
\State {\bf Output}: $\W_T$ \;
\end{algorithmic}
\end{algorithm}

\begin{figure}[h!]
\centering
\includegraphics[width=0.95\columnwidth]{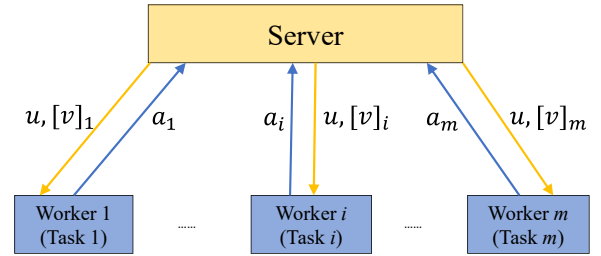}
\vspace{-0.1in}
\caption{Distributed Primal-Dual Optimization}\label{master-slaver-setting}
\end{figure}

Here we elaborate on the details of DROM:
\begin{itemize}
  \item \textbf{Model Variables}: At round $t$, the $m$ workers have primal variables $\{\w^1_t,...,\w^m_t\}$ for $m$ tasks, respectively. In addition, $m$ dual variables $\{\a^1_t,...,\a^m_t\}$ are stored at each worker. Each worker accesses local task data and updates variables independently.

  \item \textbf{Local Update}: The primal-dual optimization is conducted in a distributed manner. At round $t$, each worker $i$ is assigned a problem that accesses local data $(\x^i_t, y^i_t)$. Local problem is solved in two steps: 1) Computing the weight $\gamma_t^i$ based on $f^i_{t}(\w^i_t)$; 2) Performing an alternative learning procedure as in~(\ref{primal-dual-optim-local}): optimizing $\a^i$ with vector $[\u\v^{\top}]_i$; optimizing $\w^i$ with gradient $\nabla f^i_{t}(\w^i_t)$. Note that synchronization with the server is not allowed if noise or outlier is identified, i.e., $\gamma_t^i = 0$.

  \item \textbf{Central Update}: When the local update ends, the worker $i$ sends $\a^i_{t+1}$ to the central server. As we know that $\u\v^{\top}\mathbb{I}(\sigma_{1}(\A) > 1)\in \partial[\|\A\|_2 - 1]_+$, the server aggregates the local updates on $\A$ from all workers to calculate $(\u,\v)$, and then sends back $\u[\v]_i$ to the corresponding worker $i$. It is efficient for central computing with $O(dm)$ time. Note that the server sends back $(\u,\v)$ only when the corresponding spectrum $\sigma_1(\A) > 1$, which alleviates communication cost per round.

\end{itemize}

Motivated by the analysis in~\cite{xiao2017svd}, we provide the theoretical guarantee for this distributed algorithm regarding the regret.
The regret is based on the function $f$. Recall that minimizing the (\ref{rew-rule}) with $f(\cdot)$ is to minimize the upper bound of the (\ref{L_P-norm}) with $h(f(\cdot))$. When $f(\cdot)$ is converged to the optimal points, it infers an optimal upper bound for $h(f(\cdot))$.

\begin{thm}\label{thm-1} 
For all $t > 1$, the algorithm DROM runs over arbitrary instance-label pairs $\{(\x_t^i, y_t^i)\}_{i=1}^m$ with the update rule~(\ref{primal-dual-optim-local}). Assume $\A^*_t = \operatorname*{argmax}_{\|\A\|_2\leq 1}\hbox{tr}(\A^{\top}\W_t)$  and $\|\W_t\|_* \leq \rho / \lambda$ satisfied at all $t>1$. When $\eta_t = 1/\sqrt{t}$ the following regret is hold
\bqs\notag
\hbox{R}_T \leq m\sqrt{T}\left( D^2 + (\kappa\beta + \lambda D)^2 + (\rho + \lambda D)^2 \right).
\eqs
\end{thm}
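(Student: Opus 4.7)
The plan is to view the DROM iterates as an online primal-dual subgradient method on the unconstrained Lagrangian
\begin{equation*}
\mathcal{L}_t(\W,\A) = \sum_{i=1}^m \gamma_t^i f_t^i(\w^i) + \lambda\,\mathrm{tr}(\A^\top\W) - \rho\bigl[\|\A\|_2 - 1\bigr]_+,
\end{equation*}
which is convex in $\W$ and concave in $\A$. The first step is to upper-bound the stated regret by the cumulative saddle-point gap $\sum_t[\mathcal{L}_t(\W_t,\A^*_t) - \mathcal{L}_t(\W^*,\A_t)]$ with $\A^*_t = \operatorname*{argmax}_{\|\A\|_2\leq 1}\mathrm{tr}(\A^\top\W_t)$. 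The hypothesis $\|\W_t\|_* \leq \rho/\lambda$ ensures the inner maximizer lies in the unit spectral-norm ball, so the penalty vanishes at $(\W_t,\A^*_t)$ and $\mathcal{L}_t(\W_t,\A^*_t) = \sum_i\gamma_t^i f_t^i(\w_t^i) + \lambda\|\W_t\|_*$; since the penalty is nonpositive, $\mathcal{L}_t(\W^*,\A_t)$ is also dominated by $\sum_i \gamma_t^i f_t^i(\w^{*,i}) + \lambda\|\W^*\|_*$. Subtraction then recovers the weighted regret together with the usual $\lambda(\|\W_t\|_*-\|\W^*\|_*)$ regularization gap.

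The second step is the standard convex-concave reduction to inner products:
\begin{equation*}
\mathcal{L}_t(\W_t,\A^*_t) - \mathcal{L}_t(\W^*,\A_t) \leq \langle \g^A_t, \A^*_t-\A_t\rangle + \langle \g^W_t, \W_t-\W^*\rangle,
\end{equation*}
with $\g^A_t = \lambda\W_t - \rho\,\u\v^\top\mathbb{I}(\sigma_1(\A_t)>1)\in \partial_A\mathcal{L}_t(\W_t,\A_t)$ and $\g^W_t = \lambda\A_t + \Gamma_t\nabla F_t(\W_t)\in \partial_W\mathcal{L}_t(\W_t,\A_t)$. Both inner products decouple across the $m$ tasks, and the task-$i$ contributions slot directly into the familiar online subgradient recursion driven by (\ref{primal-dual-optim-local}): expanding $\|\w^i_{t+1}-\w^{*,i}\|_2^2$ and $\|\a^i_{t+1}-\a^{*,i}_t\|_2^2$, then telescoping with $\eta_t = 1/\sqrt{t}$, yields a $D^2\sqrt{T}$ term from the diameters and an $O(\sqrt{T})$ term from the squared subgradient norms. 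The latter are controlled by the posited assumptions: $\|\g^{w,i}_t\|_2 \leq \gamma_t^i\|\nabla f_t^i(\w_t^i)\|_2 + \lambda\|\a_t^i\|_2 \leq \kappa\beta + \lambda D$, using $\gamma_t^i\leq\kappa$, Lipschitzness of $f_t^i$, and the diameter bound on $\a$; and $\|\g^{a,i}_t\|_2 \leq \lambda\|\w_t^i\|_2 + \rho\|[\u\v^\top]_i\|_2 \leq \lambda D + \rho$, using $\|[\u\v^\top]_i\|_2 = |v_i| \leq 1$. Summing over $i\in[m]$ yields exactly $m\sqrt{T}\bigl(D^2 + (\kappa\beta+\lambda D)^2 + (\rho+\lambda D)^2\bigr)$ after bookkeeping the $\tfrac12$ factors and $\sum_t\eta_t \leq 2\sqrt{T}$.

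The main obstacle I anticipate is the Gauss-Seidel ordering in (\ref{primal-dual-optim-local}): the primal step uses $\a^i_{t+1}$ instead of $\a^i_t$, so the gradient actually driving the $\w$-update is $\lambda\a^i_{t+1} + \gamma_t^i\nabla f_t^i(\w^i_t)$ rather than the $\g^{w,i}_t$ used in the inner-product bound. The discrepancy equals $\lambda(\a^i_{t+1}-\a^i_t) = \lambda\eta_t\,\g^{a,i}_t$, which, paired against $\w^i_t - \w^{*,i}$, contributes at most $\lambda\eta_t D\|\g^{a,i}_t\|_2$ per round; after summing this stays $O(\sqrt{T})$ and is absorbed into the existing gradient-squared budget. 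A secondary subtlety is verifying that $\u\v^\top\mathbb{I}(\sigma_1(\A)>1)$ is a legitimate supergradient of the concave function $-\rho[\|\A\|_2-1]_+$ at $\A$, which follows from the standard characterization of the spectral-norm subdifferential via leading singular vectors; this legitimizes the concavity-in-$\A$ inequality invoked in step two.
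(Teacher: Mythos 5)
Your overall strategy is the one the theorem's constants point to: the three summands in the stated bound are exactly the squared diameter and the two squared subgradient-norm bounds $\kappa\beta+\lambda D$ (primal) and $\lambda D+\rho$ (dual) that you derive, each picking up $\sqrt{T}$ from $\eta_t=1/\sqrt{t}$ and a factor $m$ from summing over tasks, so you have reconstructed the intended primal--dual online-subgradient argument that the paper imports from the cited SVD-free analysis. Your treatment of the two side issues -- the Gauss--Seidel ordering of the $\a$/$\w$ updates and the legitimacy of $\u\v^{\top}\mathbb{I}(\sigma_1(\A)>1)$ as a subgradient of the spectral-norm penalty -- is also correct in spirit, though the cross term you isolate, $\sum_t\eta_t\lambda D\|\g^{a,i}_t\|_2\leq 2\lambda D(\lambda D+\rho)\sqrt{T}$, is not literally absorbed by the displayed constants without an additional AM--GM step, so the bound you reach holds only up to a constant factor.

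The step that does not go through as written is the dual telescoping against the time-varying comparator $\A_t^*$. Expanding $\|\a^i_{t+1}-\a^{*,i}_t\|_2^2$ produces per-round terms $\frac{1}{2\eta_t}\left(\|\a^i_t-\a^{*,i}_t\|_2^2-\|\a^i_{t+1}-\a^{*,i}_t\|_2^2\right)$, and because the comparator changes with $t$ these do not collapse: bounding each positive piece by $D^2/(2\eta_t)$ gives $\tfrac{1}{2}D^2\sum_t\sqrt{t}=O(D^2T^{3/2})$, while replacing $\|\a^i_{t+1}-\a^{*,i}_t\|_2^2$ by $\|\a^i_{t+1}-\a^{*,i}_{t+1}\|_2^2$ introduces a path-length term $\sum_t\|\a^{*,i}_{t+1}-\a^{*,i}_t\|_2$ that nothing in the hypotheses controls. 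You need either a fixed dual comparator -- which bounds $\lambda\|\sum_t\W_t\|_*$ rather than $\sum_t\lambda\|\W_t\|_*$ and thus weakens the conclusion -- or an explicit argument that the maximizers $\A_t^*$ drift slowly enough; as stated, this is the missing idea. A smaller imprecision: the claim that $\mathcal{L}_t(\W^*,\A_t)$ is dominated by $\sum_i\gamma_t^if^i_t(\w^{*,i})+\lambda\|\W^*\|_*$ does not follow from nonpositivity of the penalty alone, since $\mathrm{tr}(\A_t^{\top}\W^*)$ can exceed $\|\W^*\|_*$ when $\|\A_t\|_2>1$; one must combine $\mathrm{tr}(\A_t^{\top}\W^*)\leq\|\A_t\|_2\|\W^*\|_*$ with the penalty $-\rho(\|\A_t\|_2-1)$ and the hypothesis $\|\W^*\|_*\leq\rho/\lambda$ to offset the excess. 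The inequality is true, but not for the reason you give.
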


\begin{remark}
The above theorem implies that the proposed algorithm is in the order of $O(\sqrt{T})$. This order is optimal, since the objective function is not strongly convex.
\end{remark}

\vspace{0.1in}

Table~\ref{Summary-Resources-Required} compares DROM with state-of-the-art baselines, e.g., Proximal Gradient Descent (ProxGD)~\cite{duchi2010composite}, Alternating Direction Method of Multipliers (ADMM)~\cite{boyd2011distributed} and Online Frank-Wolfe (OFW)~\cite{hazan2012projection}, in terms of runtime complexity and regret. From that table, we observe that DROM achieves a lower computational complexity with an optimal regret. Note that our method is different from OFW, since DROM directly constrains model parameters while OFW constrains the gradient descent. This work is different from ProxGD as well, since DROM is primal-dual algorithm while ProxGD optimizes only the primal variable.

\section{Decentralized Periodic Communication}


The algorithm DROM requires a central server to synchronize with all workers. This limits its applications in practical scenarios where each worker can only connect with its local neighbors in a bandwidth-limited network. 

For this reason, we propose a variant of DROM for decentralized periodic-connected tasks and summarize the whole procedure, namely DROM-D, in Algorithm~\ref{D-PDML}. This algorithm is parameterized by $\mathcal{P}(\S, \tau)$, where $\S\in\mathbb{R}^{m\times m}$ is an adjacency matrix used for inter-worker communication, and $\tau > 0$ is a synchronous interval for periodic update. These parameters improve the communication-efficiency in three different ways:

\begin{itemize}

\item \textbf{Group Synchronization}: The learning process does not rely on a fusion center or network-wide communication. Instead of synchronizing with all workers, a local worker just needs to exchange information with its neighbors, where the network topology is captured by the weight matrix $\S$. Therefore, using a sparse weight matrix $\S$ reduces the overall communication cost per round. Specifically, each worker $i$ becomes a ``local server'', and aggregates $\A^{(i)}$ from its neighbors,
    \bqs\notag
    \A^{(i)} = \A\times \hbox{Diag}([\S]_i), \ \ \ [\A^{(i)}]_j = \left\{
                                                                    \begin{array}{ll}
                                                                    \a^{j}, & \  j\in\mathcal{N}_i  \\
                                                                    \mathbf{0}, & \ j\not\in\mathcal{N}_i
                                                                    \end{array}
                                                                    \right.
    \eqs
    with $\mathcal{N}_i = \{j \ | \ S_{ij} = 1\}$ as the neighbors of the worker $i$. 
\item \textbf{Periodic Optimization}: The synchronization delay time is amortized over $\tau$ synchronous interval and is $\tau$ times smaller than fully synchronous update. Moreover, periodic optimization alleviates the synchronization delay in waiting for slow workers. Observe in Figure~\ref{decentralize_setting} that the idle time of workers is significantly reduced.
    To capture the synchronous interval, we use a time-varying matrix $\S_t$ that varies as:
    \bqs\notag
    \S_t = \left\{
              \begin{array}{ll}
                \S,           &  (t \ \hbox{mod} \ \tau) = 0 \\
                \I_{m\times m}, & \hbox{otherwise}
              \end{array}
         \right.
    \eqs
    where the identity matrix $\I_{m\times m}$ means that there is no iter-worker communication during the $\tau$ local updates.

\begin{figure}[t]
\includegraphics[width=1\columnwidth]{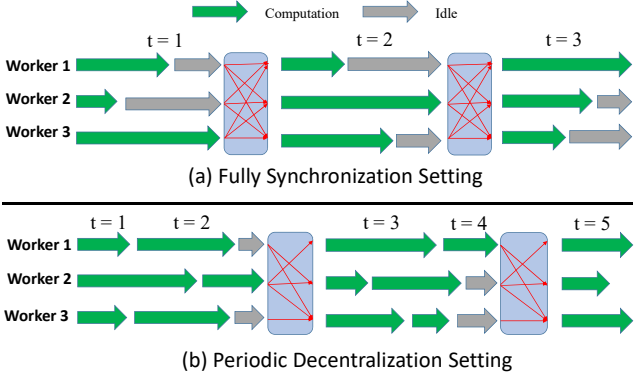}

\vspace{-0.1in}

\caption{ Illustration of communication-reduction strategies for $\tau = 2$. Green, red, grey arrows represent gradient
computation, communication, and idle state, respectively.}\label{decentralize_setting}
\end{figure}

\item \textbf{Non-blocking Execution}: As local update of $\A$ does not learn the gradient, the singular vector $\{\u, \v\}$ remains the same while worker nodes conduct local updates, i.e., $\{\u, \v\}_{t} = \{\u, \v\}_{t-1} = \ldots = \{\u, \v\}_{t-\tau+1}$ for $(t\ \hbox{mod}\ \tau) = 0$. Note that the workers only need $\{\u, \v\}_{t-\tau+1}$ before dual variable is updated from $\A_{t}$ to $\A_{t+1}$. Thus, there is no synchronous update until the workers perform  next $\tau$ rounds of local updates, which~reduces~synchronization~delay.
\end{itemize}


\begin{remark}
\emph{We study the update rule for existing synchronized algorithms since full synchronous algorithm corresponds to the special case $\S = \J = \mathbf{1}\mathbf{1}^{\top}, \tau = 1$. We show how existing communication-efficient algorithms are special cases of the general decentralized framework $\mathcal{P}(\S, \tau)$:}
\begin{itemize}
  \item Fully  Synchronization $\mathcal{P}(\J, 1)$: \emph{The local models are synchronized with all other workers after every round.}
  \item Periodic Synchronization $\mathcal{P}(\J, \tau)$: \emph{The local models are synchronized with all other workers after every $\tau$ rounds.}
  \item Periodic decentralization $\mathcal{P}(\S, \tau)$: \emph{The matrix $\S$ is fixed as a sparse weight matrix. Local model is updated via aggregating with few neighbors after every $\tau$ rounds.}
\end{itemize}
\end{remark}

Below provides theoretical guarantee of the decentralized periodic algorithm DROM-D regarding the regret.

\begin{thm}\label{thm-2}
The algorithm DROM-D runs over arbitrary sequential instance-label pairs.
Assume that $\tau \geq 1$ and $\S\in\mathbb{R}^{m\times m}$ is a random matrix with $S_{ij} \in[0,1]$. Let $\A^*_t = \operatorname*{argmax}_{\|\A\|_2\leq 1}\hbox{tr}(\A^{\top}\W_t)$  and $\|\W_t\|_* \leq \rho / \lambda$ are satisfied on any $t>1$. When $\eta_t = 1/\sqrt{\lceil t/\tau\rceil}$, the regret holds,
\bqs\notag
\hbox{R}_T \leq \sqrt{T}m \tau^{3/2} \left((D/\tau)^2 + (\kappa\beta + \lambda D)^2 + (\lambda D + \rho)^2\right).
\eqs
\end{thm}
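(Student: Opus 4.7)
The plan is to extend the primal-dual regret analysis behind Theorem~\ref{thm-1} to accommodate the delayed, partial communication pattern governed by $\mathcal{P}(\S,\tau)$. Since DROM-D synchronizes the dual iterates only at rounds divisible by $\tau$, I will group the horizon $T$ into $K=\lceil T/\tau\rceil$ ``epochs'' of length $\tau$, and treat each epoch as a single outer step of the online primal-dual scheme analyzed in Theorem~\ref{thm-1}, with effective step size $\eta_k = 1/\sqrt{k}$. Concretely, for each worker $i$ and epoch $k$ I would introduce the epoch-averaged iterates $\bar{\w}^i_k = \tfrac{1}{\tau}\sum_{t\in\text{epoch }k}\w^i_t$ and $\bar{\a}^i_k = \tfrac{1}{\tau}\sum_{t}\a^i_t$, so that the inter-epoch dynamics look like one sub-gradient step of size $\tau\eta_k$ on an averaged objective, while the intra-epoch dynamics contribute only a bounded drift.

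The key steps I would carry out in order are:
\begin{enumerate}
\item Write the regret as a saddle-point residual for the regularized objective $\Phi_t(\W,\A) = \sum_i\gamma_t^i f^i_t(\w^i) + \lambda\,\mathrm{tr}(\A^\top\W) - \rho[\|\A\|_2-1]_+$, and invoke convex-concavity plus Lemma~\ref{reweight-obj} to upper-bound $\mathrm{R}_T$ by $\sum_t \langle \nabla_{\W}\Phi_t(\W_t,\A_t),\W_t-\W^*\rangle - \langle\nabla_{\A}\Phi_t(\W_t,\A_t),\A_t-\A^*\rangle$, splitting the sum coordinate-wise across the $m$ workers.
\item For a fixed worker $i$, apply the standard online-mirror-descent telescoping inequality to the $K$ epoch-averaged iterates $(\bar{\w}^i_k,\bar{\a}^i_k)$ with step size $\eta_k=1/\sqrt{k}$. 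Because $\|\bar{\w}^i_k-\w^*\|_2\leq D$ and similarly for the dual, the leading term is the usual $\sum_k \eta_k \|g_k\|_2^2 + D^2/\eta_K$ pattern; summing $\eta_k$ gives the $\sqrt{K}=\sqrt{T/\tau}$ factor and the $D^2/\eta_K = D^2\sqrt{K}$ factor. The gradient norms are bounded by $\|\gamma^i_t\nabla f^i_t\|_2 + \lambda\|\a\|_2 \leq \kappa\beta + \lambda D$ on the primal side and by $\|\w\|_2 + \|\u\v^\top\|\leq \lambda D + \rho$ on the dual side (using $\sigma_1=1$ at the active case).
\item Convert the $K$-epoch bound back into a $T$-round bound. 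Each epoch aggregates $\tau$ sub-gradients, so the effective gradient step per epoch has magnitude scaled by $\tau$: this produces the $\tau^2$ factor inside the squared-norm terms, which when combined with the $\sqrt{K}=\sqrt{T/\tau}$ sum gives $\tau^{3/2}\sqrt{T}$ multiplying $(\kappa\beta+\lambda D)^2$ and $(\lambda D+\rho)^2$. The diameter term $D^2/\eta_K = D^2\sqrt{T/\tau} = \tau^{3/2}(D/\tau)^2\sqrt{T}$, matching the stated bound.
\item Bound the drift between the actual per-round iterates $(\w^i_t,\a^i_t)$ and the epoch averages $(\bar{\w}^i_k,\bar{\a}^i_k)$. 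Because no synchronization occurs within an epoch, $\{\u,\v\}$ is frozen, so each local step is a bounded-norm sub-gradient step of size $\eta_t\leq 1/\sqrt{k}$; consecutive iterates differ by $O(\eta_t\,\tau\,G)$ where $G$ absorbs the gradient bounds, and this contributes only lower-order terms that are swallowed by the main bound.
\item Absorb the decentralization over $\S$. Since $\S\in[0,1]^{m\times m}$ merely reweights the aggregated dual updates, the row-sums of $\A^{(i)}$ stay within the same Euclidean ball, so replacing a full sum over workers by a neighbor-sum only changes constants inside the already-bounded gradient terms; this lets me reuse the per-worker analysis and sum over $i\in[m]$ to produce the leading $m$ factor.
\end{enumerate}

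The main obstacle, as I see it, is Step~4: carefully quantifying how much the local iterates $(\w^i_t,\a^i_t)$ wander from the virtual epoch average while $\{\u,\v\}$ is held fixed, and showing that this drift contributes at most $O(\tau^{3/2}\sqrt{T})$ rather than $O(\tau^2\sqrt{T})$. The delicate point is that the dual-update direction $\w^i_t - [\u\v^\top]_i$ varies across the epoch even though the singular vectors do not, and one must exploit the Lipschitz structure plus the choice $\eta_t = 1/\sqrt{\lceil t/\tau\rceil}$ (constant within an epoch) so that the drift telescopes cleanly. Step~5 is a conceptual rather than technical hurdle: care is required to check that the random adjacency $\S$ does not inflate the effective gradient bound beyond the constants appearing in Theorem~\ref{thm-1}, which I would handle by normalizing so that $\|\S\|_\infty\le 1$ and observing that the dual updates after aggregation remain in the same ball of radius $D$.
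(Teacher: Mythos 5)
The paper does not print a proof of Theorem~\ref{thm-2} in the body (it only notes that the analysis is ``motivated by'' the primal-dual regret argument behind Theorem~\ref{thm-1}), so I can only judge your proposal against that template. Your skeleton is the natural one, and your bookkeeping is consistent: the three squared terms in Theorem~\ref{thm-1} are exactly the diameter term $D^2/\eta_T$ and the primal/dual gradient-norm terms $\|\lambda\A+\gamma\nabla f\|^2\le(\kappa\beta+\lambda D)^2$ and $\|\lambda\W-\rho\u\v^\top\|^2\le(\lambda D+\rho)^2$ from a standard convex-concave OGD telescoping, and your epoch-averaging with $K=\lceil T/\tau\rceil$ reproduces the stated $\tau^{3/2}\sqrt{T}$ and $(D/\tau)^2$ factors arithmetically. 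That reverse-engineering is credible evidence you have the right decomposition.

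However, two steps you flag are genuine gaps, not merely delicate points. First, Step~4 is asserted, not proved: within an epoch the dual update direction $\w^i_t-[\u\v^\top]_i$ drifts while $\{\u,\v\}$ is frozen, and when you propagate that drift through the primal update the resulting correction is $O(\tau^2 G^2\sum_k\eta_k)=O(\tau^{3/2}\sqrt{T}\,G^2)$ --- the \emph{same} order as your main term, not lower order, so it cannot be ``swallowed'' without tracking its constant; you need the explicit telescoping you allude to. Second, and more seriously, Step~5 does not engage with what DROM-D actually does: each worker $i$ extracts the leading singular pair of its \emph{own} masked matrix $\A^{(i)}=\A\,\hbox{Diag}([\S]_i)$, so different workers use different $\u\v^\top$ and none of these is a subgradient of $[\|\A\|_2-1]_+$ for the global $\A$. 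Your claim that neighbor-aggregation ``only changes constants'' is exactly the statement that needs proof; in particular the bound in Theorem~\ref{thm-2} carries no dependence on the connectivity of $\S$ (e.g., its spectral gap $\zeta$), and any correct argument must explain why replacing the global subgradient by $m$ inconsistent local ones costs nothing in the leading term. Without closing these two points the proposal is a plausible outline rather than a proof.
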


\vspace{0.1in}

\begin{remark}
The regret is affected by the parameters $\lambda$ and $\rho$ that are related to task structure since the regret is hold when $\|\W\|_* \leq \rho/\lambda$. Assume that $D \leq 1$. If all tasks are identical, we have $\|\W\|_* = 1$, then regret becomes $\mathcal{O}(m\sqrt{T}\lambda^2\tau^{3/2})$ due to $\rho = \lambda$. If tasks are independent and unrelated with others, i.e., $\|\W\|_* = m$ leads to $\rho = m\lambda$, then regret becomes $O(m^3\sqrt{T}\lambda^2\tau^{3/2})$. It infers that a low-rank task structure yields to a small regret.
\end{remark}

\begin{algorithm}[t]
\caption{DROM-D: The DROM algorithm in Decentralized Periodic setting}\label{D-PDML}
\begin{algorithmic}[1]
\State {\bf Input}: $\{\x^i_{t},y^i_{t}\}$ with $i\in[m]$ and $t\in[T]$, the metrics $\mathcal{P}(\S,\tau)$, parameters $p$ and $\xi$ \;
\State {\bf Initialize}: $\w^i_{0} = \mathbf{0}$, $\a^i_{0} = \mathbf{0}$ for all workers $i\in[m]$ \;
\For{$t=1,\ldots, T$}
    \State \textbf{for all workers: $i=1,\ldots,m$ in parallel do}
        \State \hspace{0.2in} Solve \textbf{local problem} with $\eta_t = \frac{1}{\sqrt{\lceil t/\tau\rceil}}$:
        \bqs\notag
         & \gamma_t^i = \left\{
                  \begin{array}{ll}
                    p f^i_{t}(\w_t^i)^{p-1}, & f^i_{t}(\w_t^i)^{p} \leq \xi \\
                    0, & \hbox{otherwise}
                  \end{array}
                \right. \\
         & \w^i_{t+1} = \w^i_{t} - \eta_t \left(\a^i_{t} + \gamma^i_{t}\nabla f^i_{t}(\w^i_t) \right); \\
         & \a^i_{t+1} = \a^i_{t} + \eta_t \left(\w^i_{t+1} - [\u\v^{\top}]_i \right);
         \eqs
        \State  \textbf{If $t \ \hbox{mod} \ \tau = 0$ do central problem}:
            \State\hspace{0.2in} Broadcast $\a^i_{t+1}$ to its neighbors;
            \State\hspace{0.2in}  Wait to receive $\a^j_{t+1}$ from task $j\in \mathcal{N}_i$;
            \State\hspace{0.2in}  Aggregate $\A^{(i)} = \A_{t+1}\times \hbox{Diag}([\S]_i)$:
            \State\hspace{0.2in}  $\u\v^{\top} \mathbb{I}(\sigma_1(\A^{(i)}) > 1) \in \partial[\|\A^{(i)}\|_2 - 1]_+$;
\EndFor
\State {\bf Output}: $\W_T$ \;
\end{algorithmic}
\end{algorithm}

%
%
%

\begin{figure*}[t]
\begin{center}
\mbox{
\includegraphics[width=2.275in]{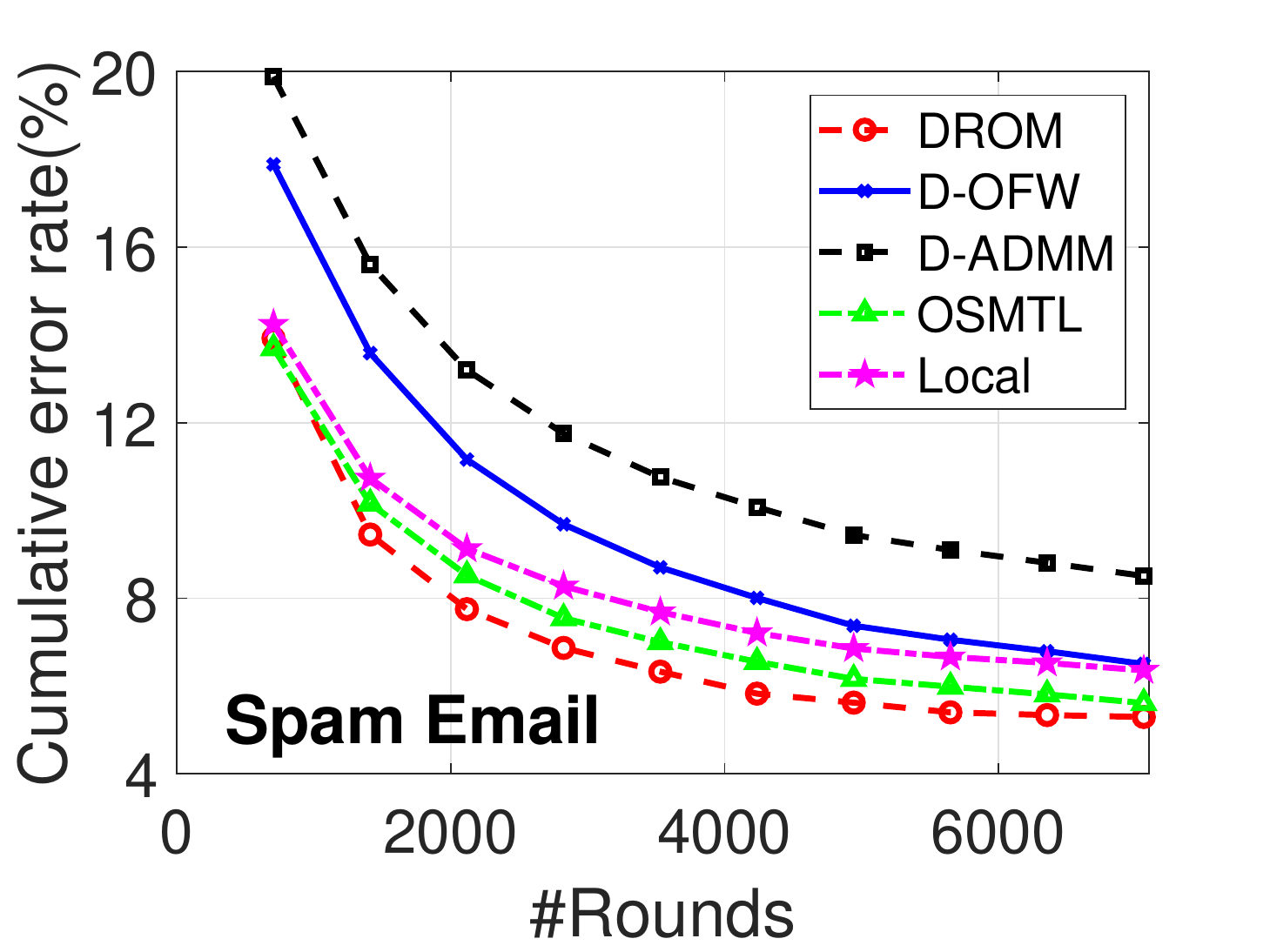}
\includegraphics[width=2.275in]{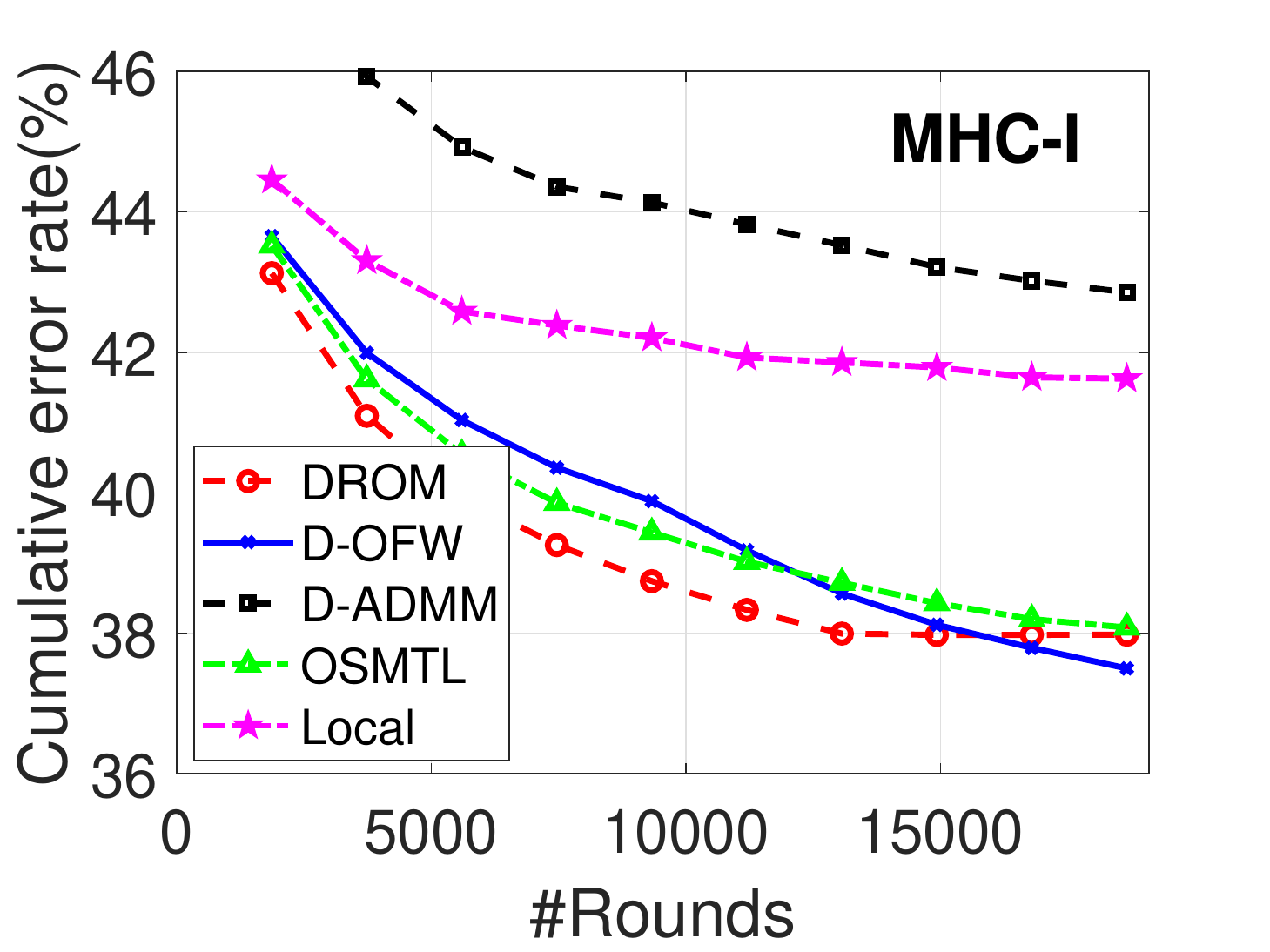}
\includegraphics[width=2.275in]{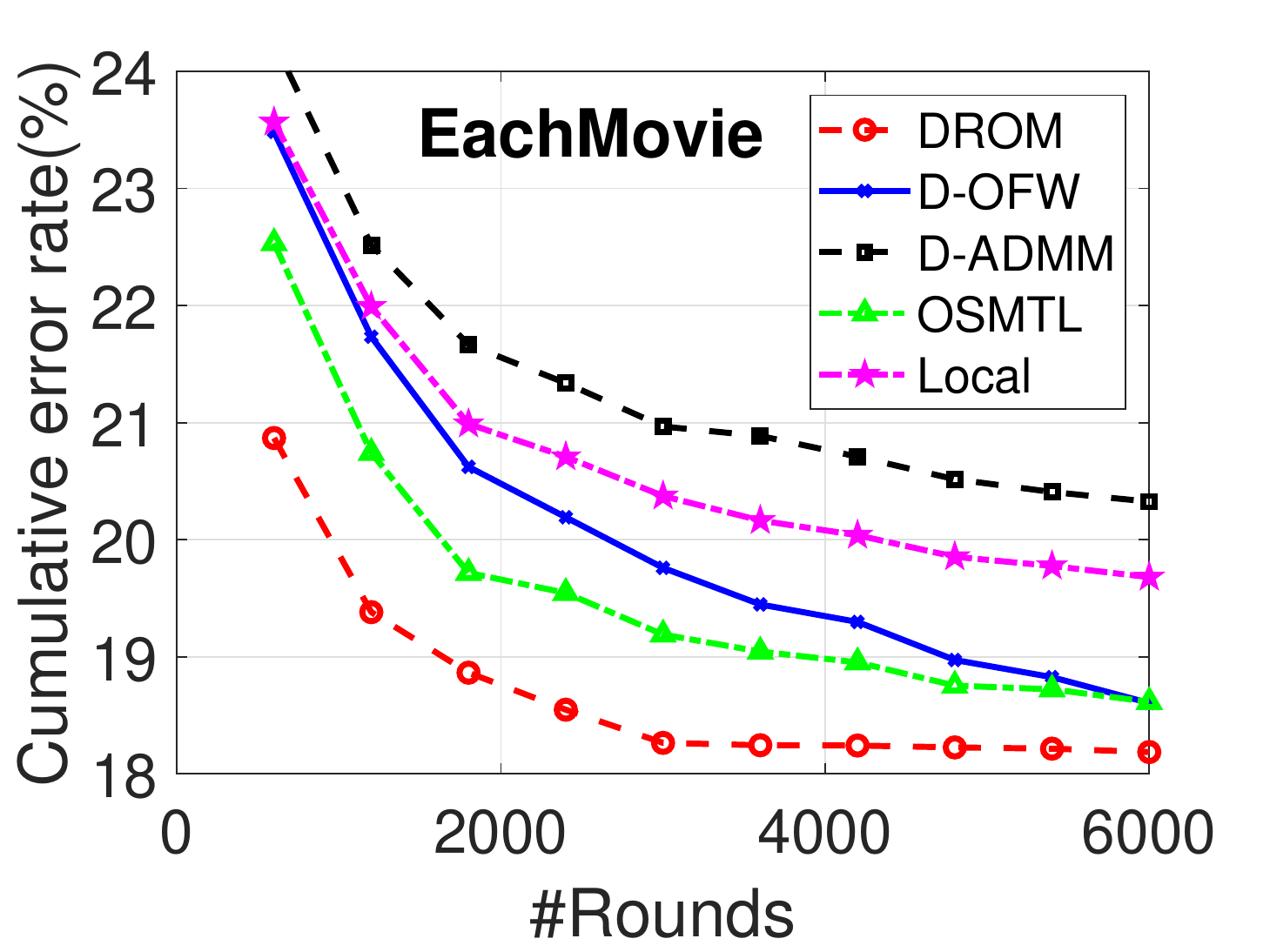}
}

\mbox{
\includegraphics[width=2.275in]{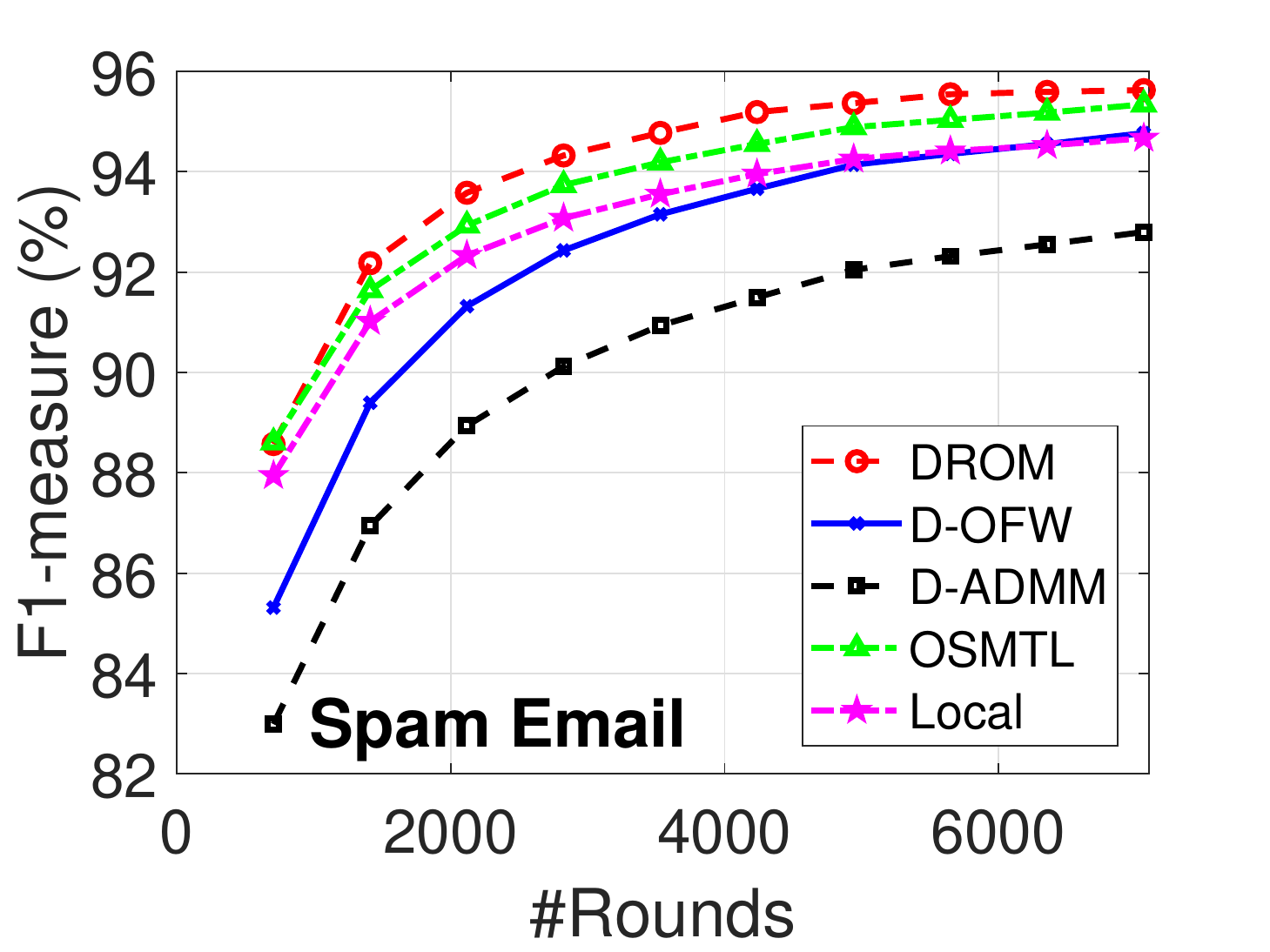}
\includegraphics[width=2.275in]{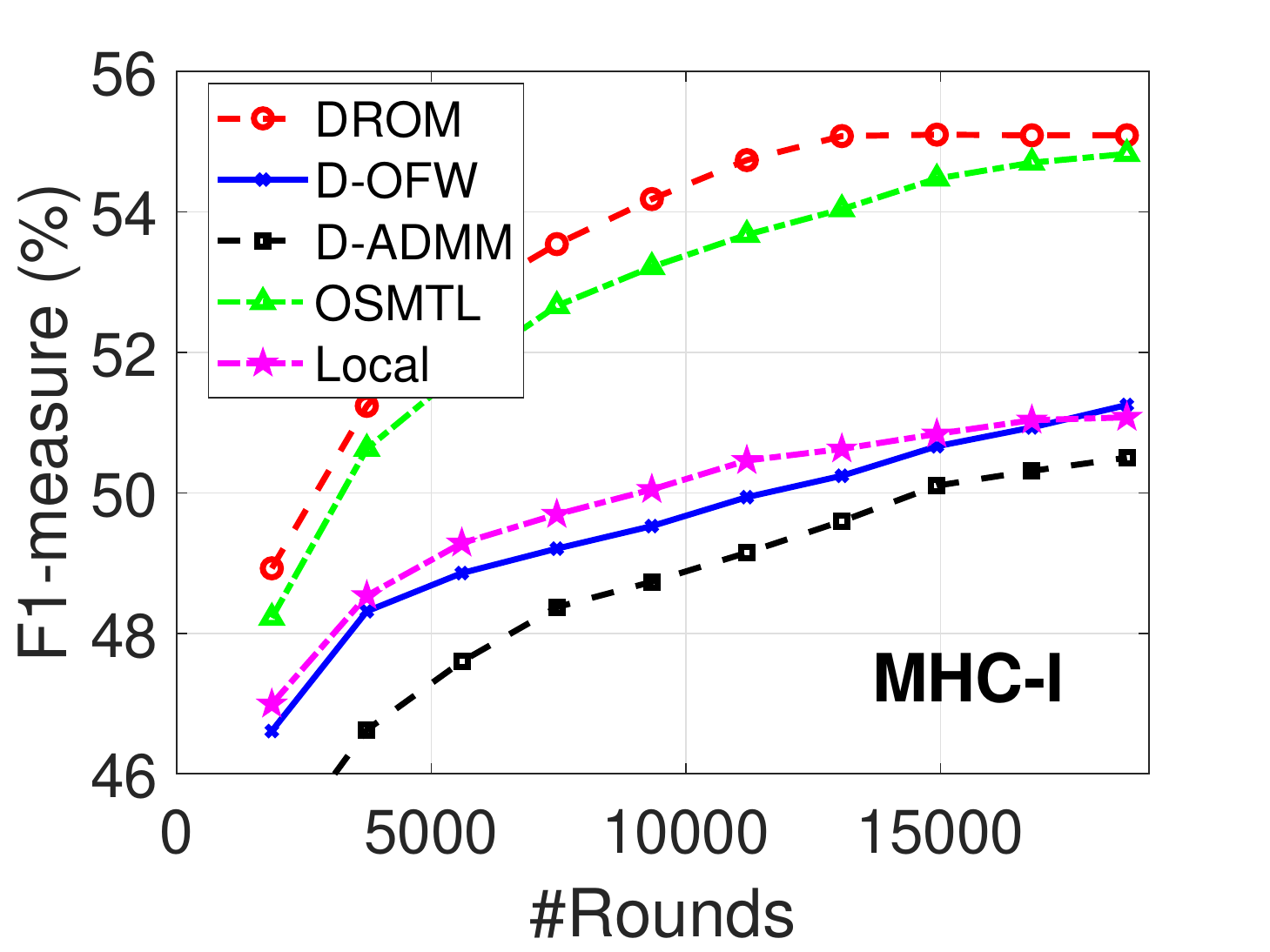}
\includegraphics[width=2.275in]{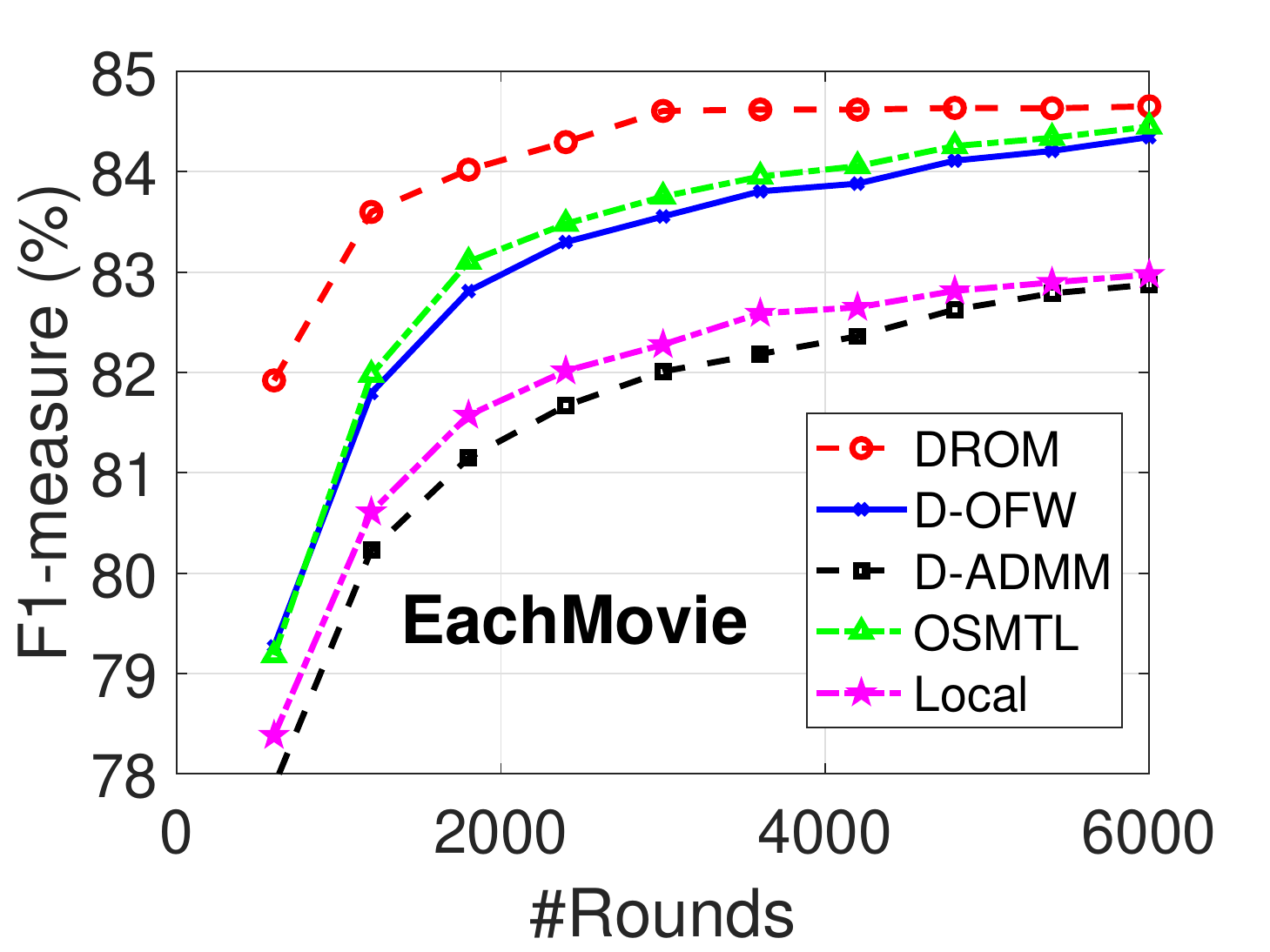}
}
\end{center}
\vspace{-0.15in}
\caption{Cumulative error rate and F1-measure along online learning process}\label{error-F1-query}
\end{figure*}

\begin{figure*}[t]
\begin{center}
\mbox{
\includegraphics[width=2.275in]{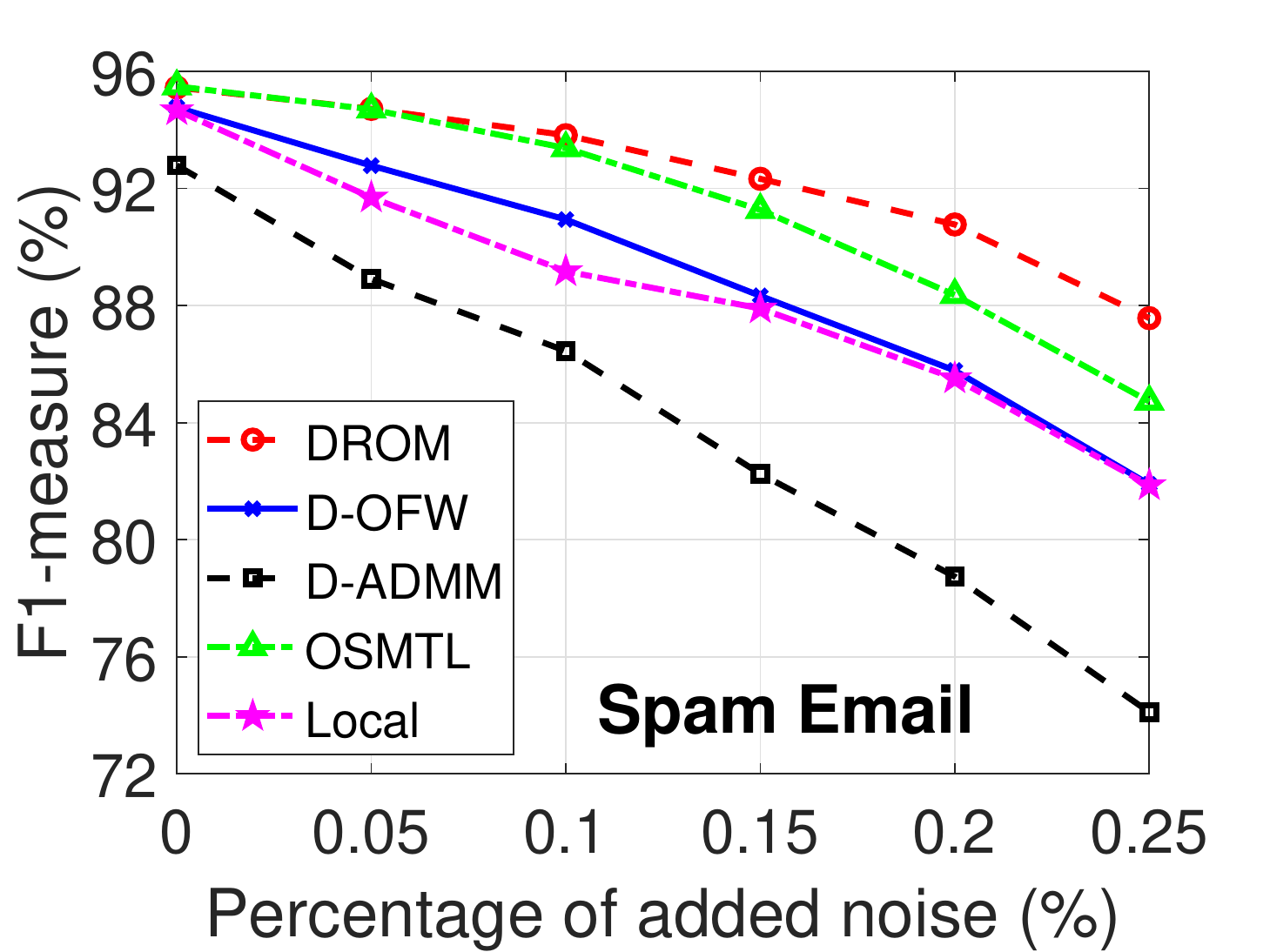}
\includegraphics[width=2.275in]{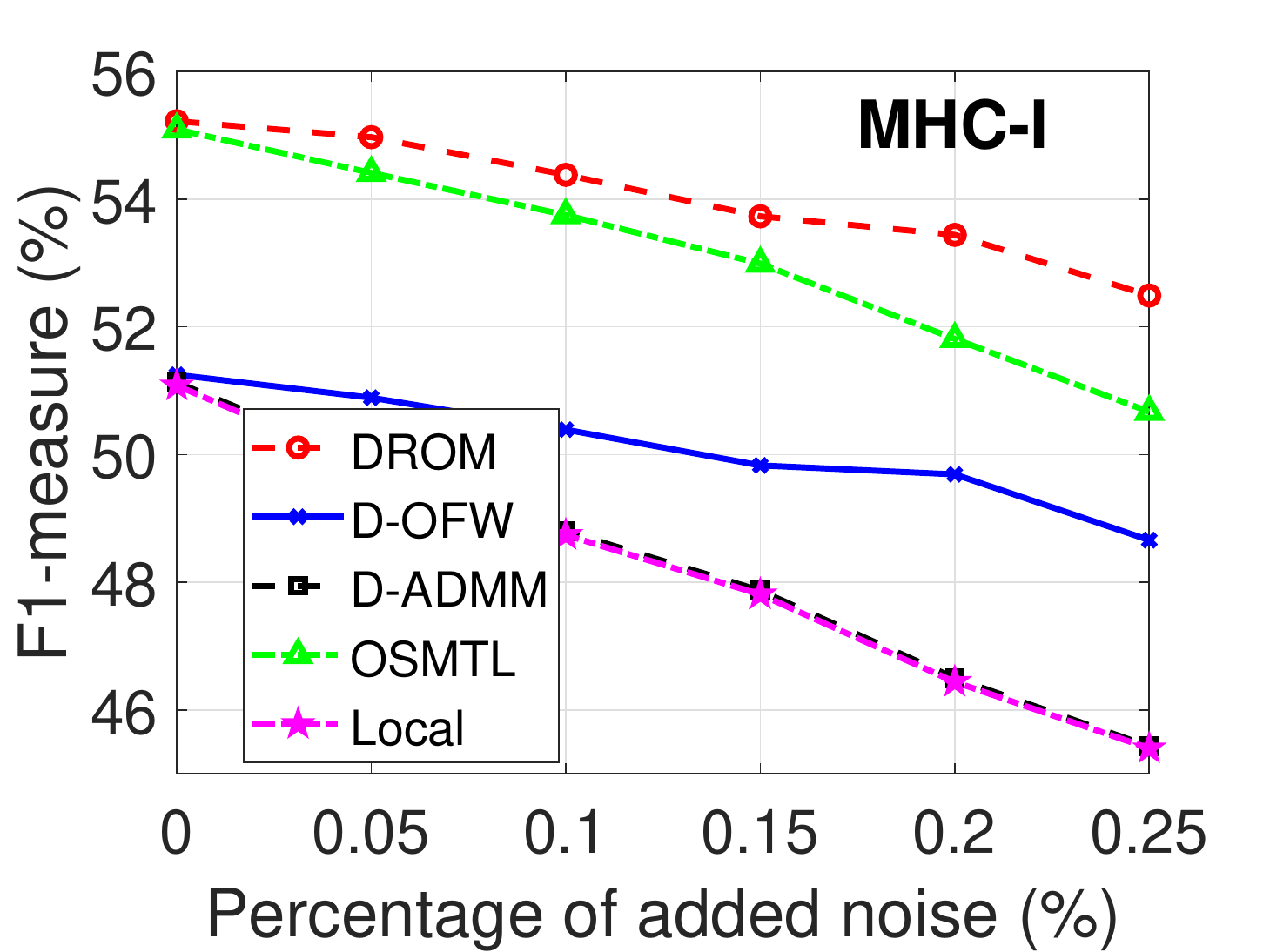}
\includegraphics[width=2.275in]{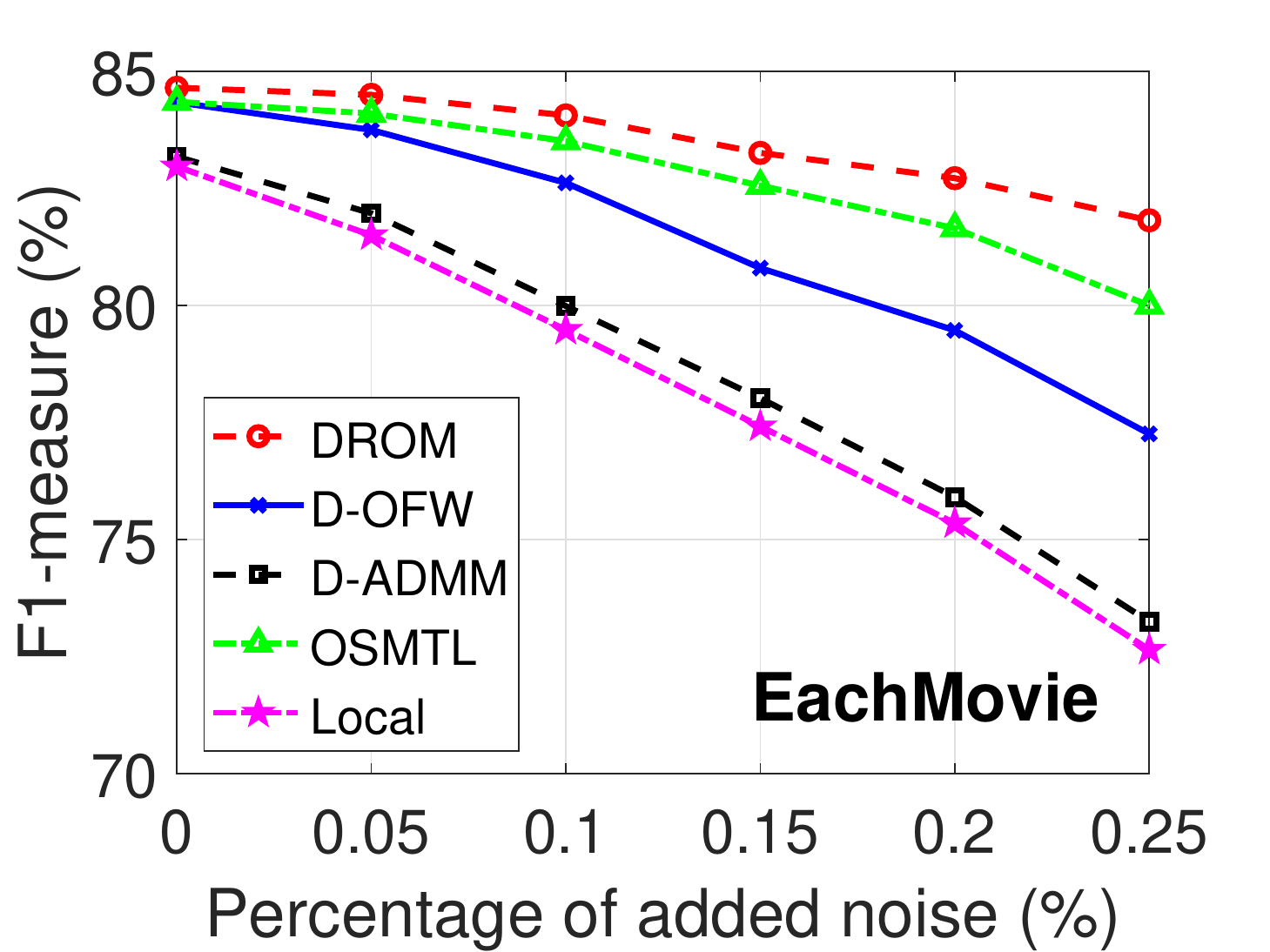}}
\end{center}
\vspace{-0.15in}
\caption{Classification accuracy performance under adversarial noisy data}\label{noisy-ratio}
\end{figure*}

\section{Experiments}

Empirical experiments are conducted to evaluate the algorithms on three  datasets used in previous work~\cite{zhang2018distributed}.
Table~\ref{statistic_data} summarizes the statistics of the datasets.\vspace{-0.2in}

\begin{table}[h!]
\centering
\caption{Description of the datasets}
\label{statistic_data}
\begin{tabular}[1.1\textwidth]{|l|r|r|r|} \hline
            & Spam Email & MHC-I & EachMovie \\ \hline
\#Tasks       & 4     & 12     & 30 \\ \hline
\#Sample      & 7,068  & 18,664  & 6,000 \\ \hline
\#Dimesion    & 1,458  & 400    & 1,783 \\ \hline
\#MaxSample   & 4,129  & 3,793   & 200   \\ \hline
\#MinSample   & 710   & 415    & 200   \\ \hline
\end{tabular}\vspace{-0.1in}
\end{table}

Spam Email\footnote{http://labs-repos.iit.demokritos.gr/skel/i-config/} contains 7,068 emails collected from mailboxes of 4 users (i.e., 4 tasks). Each mail entry is represented by a word document vector via the TF-IDF conversion technique. A classifier is proposed to classify each incoming email into two categories: \emph{legitimate} or \emph{spam} for each user.

MHC-I\footnote{http://web.cs.iastate.edu/~honavar/ailab/}, a bio-marker dataset, contains 18,664 peptide sequences for 12 MHC-I molecules (i.e., 12 tasks). Each peptide sequence is converted to a 400 dimensional feature vector~\cite{LiCHLJ11}. The learner aims to classify whether a peptide sequence is \emph{binder} or \emph{non-binder} for each MHC-I molecule. Recent work has demonstrated that the shared knowledge among related molecules (tasks) can be leveraged to improve the MHC-I binding prediction.

EachMovie\footnote{http://goldberg.berkeley.edu/jester-data/} is a movie recommendation dataset where 72,916 users rate a subset of 1,628 movies. It randomly prioritizes 6,000 user-rating pairs spanning 30 users and 200 movies. The ratings (i.e. $[1,6]$) are converted into \emph{like} or \emph{dislike}, based on the rating order. For each movie, we randomly select 1,783 users who viewed that movie and use their ratings as its features. Finally, we obtain 200 instances (1,783 features) for each of 30 tasks.


\begin{table*}[t]
\centering
\caption{Sensitivity study on the parameters $\tau$ and $\zeta$}
\label{Sensitivity-tau-zeta}
\begin{tabular}[3\textwidth] {|c|c|c|c|c|c|c|}
\hline
\multirow{2}{*}{Parameter Setting} & \multicolumn{2}{|c|}{Spam Email} & \multicolumn{2}{|c|}{MHC-I}  &  \multicolumn{2}{|c|}{EachMovie}  \\
\cline{2-7}
                             & Error Rate    & F1-measure    & Error Rate    & F1-measure    & Error Rate   & F1-measure \\ \hline\hline
$\tau=1, \zeta=0$			 & 5.31 (2.14)	 & 95.63 (1.75)  & 38.05 (0.31)	 & 55.11 (0.49)  & 18.13 (6.54) & 84.61 (8.36) \\ \hline
$\tau=1, \zeta=0.5$		     & 5.52 (3.32)	 & 95.32 (1.64)	 & 38.61 (1.21)	 & 54.51 (1.51)  & 18.90 (6.12) & 83.22 (8.77) \\ \hline
$\tau=1, \zeta=0.9$		     & 6.36 (0.64)	 & 94.67 (0.54)	 & 41.62 (3.95)	 & 51.08 (6.23) & 19.78 (7.39) & 82.97 (9.35)  \\ \hline\hline
$\tau=20, \zeta=0$			 & 5.32 (2.16)	 & 95.63 (1.76)	 & 38.13 (0.21)	 & 55.05 (0.33)	 & 18.27 (6.67) & 84.51 (8.45)  \\ \hline
$\tau=20, \zeta=0.5$		 & 5.67 (3.12)   & 95.03 (1.98)	 & 38.99 (0.54)	 & 54.11 (1.35)  & 19.35 (5.39) & 83.07 (8.99)  \\ \hline
$\tau=20, \zeta=0.9$         & 6.53 (0.43)   & 94.29 (0.36)  & 41.93 (1.97)  & 50.81 (1.23)	 & 19.86 (5.32) & 82.91 (7.33)  \\ \hline
\end{tabular}
\end{table*}

\subsection{Baselines and Evaluation Metrics}

We compare our method with four baselines:
1) \textbf{Local}, where each task learns a model locally on its own data.
2) {\bf Smoothed OMTL} (OSMTL)~\cite{murugesan2016adaptive} jointly learns the per-task hypothesis and the inter-task relationships in an online setting. 
3) Two distributed optimization methods for regularized online multi-task learning: Online Alternating Direction Method of Multipliers ({\bf D-ADMM})~\cite{matamoros2017asynchronous} and Online Frank-Wolfe ({\bf D-OFW})~\cite{zhang2017projection}. We adapt two algorithms into distributed multi-task setting, and provide corresponding implementations in Supporting Materials.
To handle with online data, we modify the offline setting of ADMM by retaining online data after observing one example. 
All parameters of the baselines are tuned according to their recommended instructions.
DROM and DORM-D are the proposed distributed algorithms.
For both methods, we simply set $\lambda = 1, \rho = 1$ to avoid overfitting, and tune $p\in(0,1)$ with $\xi = 1$ to deal with adversarial noise.

There are no good ways of unitizing network when prior knowledge of tasks is unknown. Generally speaking, there are three different types of networks:  {\em full-connected} ($\zeta = 0$), {\em rid-connected} ($\zeta = 0.5$) and {\em ring-connected} ($\zeta = 0.9$) network, used to examine the impact of adjacency matrix $\S$, where $\zeta = \max(|\sigma_2(\S)|,|\sigma_m(\S)|)$ is the second largest absolute eigenvalue of $\S$. Specifically, $S_{ij} = 1$ indicates a connection between task $i$ and task $j$; $S_{ij} = 0$ otherwise. Moreover, there are two types of synchronization, {\em fully synchronization} ($\tau = 1$) and {\em periodic synchronization} (e.g., $\tau = 20$) after every (e.g.,) $20$ rounds.\\

We evaluate the performance using two measurements:\\
1) {\bf cumulative error rate},  ratio of predicted errors over online data, reflecting the prediction accuracy of online learning; 
2) \textbf{F1-measure}, the harmonic mean of precision and recall, evaluating the performance of classification model.
Number of iteration (trial) is used to reflect the convergence of online algorithms~\cite{zhang2018distributed}, which is different from offline setting with CPU time~\cite{smith2017federated}.
For error rate, the smaller the measures, the better the performance of an algorithm; For F1-measure, a higher value means a better performance.
To compare these algorithms fairly, we randomly shuffle the ordering of samples in each dataset. We repeat each experiment 10 times and report the averaged results.

\subsection{Comparison Result}

Evaluation measures versus running rounds of online learning is plotted in Figure~\ref{error-F1-query}. The results illustrate the following:
\begin{itemize}
\item Among all the baselines, DROM achieves a lower error rate and a higher F1 score on most measures. 
\item The improvement of our algorithm over the baselines is  significant. As can be seen, our method converges faster than other baselines. This is expected as DROM achieves an optimal regret with an efficient runtime complexity.
\item  Although D-OFW has a higher order of regret, it practically obtains a better result than strong baselines.
\item Nuclear norm regularization boosts the prediction performance over plain single task learning significantly, which infers the effectiveness of leveraging the shared knowledge in multi-task learning.
\end{itemize}

To evaluate the robustness of the algorithms, we randomly impose adversarial noisy labels with a probability from $5\%$ to $25\%$. Figure~\ref{noisy-ratio} presents the evaluation measures of the algorithms on various noisy levels.
We observe that DROM consistently outperforms other methods over various levels of noise data.
This shows the clear advantage of developing robust loss functions on adversarial learning scenario.

\begin{table}[h!]
\centering
\caption{Run-time (sec) of each iteration for each algorithm}
\label{running_time}
\begin{tabular}[2.1\textwidth]{|c|c|c|c|c|} \hline
Algorithm & Spam Email & MHC-I & EachMovie \\ \hline
Local       & 0.53    & 0.76   & 1.14   \\ \hline
D-OFW         & 1.16    & 1.50   & 2.33   \\ \hline
D-ADMM        & 1.92   & 3.35   & 4.01   \\ \hline \hline
DROM        & 1.26   & 1.59   & 2.25   \\ \hline
\end{tabular}
\end{table}

We evaluate these algorithms with runtime cost in Table~\ref{running_time}. It can be observed that DROM runs faster than D-ADMM. The reason should be obvious as D-ADMM has to perform SVD in each round, while DROM computes only the leading singular vectors. DROM is relatively slower than Local, which is expected since DROM has to learn the structure of task relativeness. However, the extra computational cost is worth it as learning multiple tasks jointly can significantly improve the prediction performance.

%
%

\subsection{Sensitivity study on the parameters $\tau$ and $\zeta$}

We conduct sensitivity analysis on the parameters $\tau$ and $\zeta$. A high value of $\tau$ or $\zeta$ would reduce inter-worker communication, which gradually leads to independent learning on local tasks. Specifically, we set $\tau$ to $\{1, 20\}$ and $\zeta$ to $\{0,0.5,0.9\}$, and evaluate DROM-D in various $\mathcal{P}(\S,\tau)$. The comparison result is shown in Table~\ref{Sensitivity-tau-zeta}.
We observe that either increasing a value of $\tau$ or $\zeta$ would degrade the performance. In a fully-connected setting ($\zeta$ = 0), large synchronous interval ($\tau = 20$) is tolerant since the workers can interact with others to leverage the task relativeness. In a sparse-connected network ($\zeta > 0$), frequent synchronization ($\tau=1$) is preferable since it can accelerate propagation of information between the tasks. To achieve a balance, we choose $\tau = 20$ in fully-connected tasks in our experiment since the algorithm achieves a good accuracy with a low cost of communication.

\section{Conclusion}
\vspace{-0.02in}

This paper studies distributed primal-dual adaptive optimization for online multi-task learning. Specifically, we propose an adaptive projection-free algorithm with optimal regret and computational efficiency. Furthermore, the proposed algorithm is well-adapted in decentralized periodic-connected network with theoretical analysis based on task relatedness. We evaluate the efficacy of the proposed algorithm on three real-world datasets for multi-task classification and find out it runs significantly faster than the counterpart algorithms with projection. The theoretical results regarding the robust learning on adversarial noise have also been verified.

\vspace{-0.08in}

\bibliographystyle{aaai20}
\bibliography{standard}

\end{document}